\newtheorem{thm}{Theorem}[section]
\newtheorem{cor}[thm]{Corollary}
\newtheorem{definition}[thm]{Definition}
\newtheorem{proper}[thm]{Property}
\newtheorem{remark}[thm]{Remark}
\title{\textbf{\textsc{Data augmentation and feature selection for automatic model recommendation in computational physics}}}
\author{Thomas DANIEL\footnotemark[1]  \footnotemark[2] , Fabien CASENAVE\footnotemark[1] , Nissrine AKKARI\footnotemark[1] , David RYCKELYNCK\footnotemark[2]}
\date{January 12, 2021}
\DeclareMathOperator*{\argmax}{arg\,max}
\newcommand{\bs}[1]
{
\boldsymbol{ #1}
}
\newcommand{\Prob}{\mathbb{P}}
\newcommand{\E}{\mathbb{E}}
\algnewcommand\INPUT{\item[\textbf{Input:}]}%
\algnewcommand\OUTPUT{\item[\textbf{Output:}]}%
\begin{document}
\maketitle

\renewcommand{\thefootnote}{\fnsymbol{footnote}}
\footnotetext[1]{ \ SafranTech, Rue des Jeunes Bois, Ch\^ateaufort, 78114 Magny-les-Hameaux (France).}
\footnotetext[2]{ \ MINES ParisTech, PSL University, Centre des mat\'{e}riaux (CMAT), CNRS UMR 7633, BP 87, 91003 Evry (France).}
\renewcommand{\thefootnote}{\arabic{footnote}}

\section*{Abstract}
Classification algorithms have recently found applications in computational physics for the selection of numerical methods or models adapted to the environment and the state of the physical system. For such classification tasks, labeled training data come from numerical simulations and generally correspond to physical fields discretized on a mesh. Three challenging difficulties arise: the lack of training data, their high dimensionality, and the non-applicability of common data augmentation techniques to physics data. This article introduces two algorithms to address these issues, one for dimensionality reduction via feature selection, and one for data augmentation. These algorithms are combined with a wide variety of classifiers for their evaluation. When combined with a stacking ensemble made of six multilayer perceptrons and a ridge logistic regression, they enable reaching an accuracy of $90\%$ on our classification problem for nonlinear structural mechanics.

\noindent \textbf{Keywords:} machine learning, classification, automatic model recommendation, feature selection, data augmentation, numerical simulations.

\section{Introduction}

Classification problems can be encountered in various disciplines such as handwritten text recognition~\cite{doi:10.1162/neco.1989.1.4.541}, document classification~\cite{DocClassif}, and computer-aided diagnosis in the medical field~\cite{KOUROU20158}, among many others. In numerical analysis, classification algorithms are getting more and more attention for the selection of efficient numerical models that can predict the behavior of a physical system with very different states or under various configurations of its environment~\cite{LDEIM2014, RyckelynckComputerVision, 10.3389/fmats.2019.00075, Maulik19, doi:10.2514/6.2020-0418, kapteyn2020physicsbased, MAULIK2020132409, ROM-net}. Classifiers have been used as reduced-order model (ROM) selectors in~\cite{LDEIM2014, RyckelynckComputerVision, doi:10.2514/6.2020-0418, kapteyn2020physicsbased, ROM-net} in computational mechanics, enabling the computation of approximate solutions at lower cost by replacing a generic high-fidelity numerical model by a specific (or local) ROM adapted to the simulation's context. Reduced-order modeling~\cite{10.5555/2568435, keiper2018reduced} consists in identifying an appropriate low-dimensional subspace on which the governing equations are projected in order to reduce the number of degrees of freedom of the solution. In~\cite{ROM-net}, the combination of a classifier with a dictionary of local ROMs has been termed \textit{dictionary-based ROM-net}. Such approaches are promising numerical methods using both physics equations and a collection of latent spaces to compute approximations of solutions lying in nonlinear manifolds.

Dictionary-based ROM-nets use a physics-informed automatic data labeling procedure based on the clustering of numerical simulations. Due to the cost of numerical simulations, training examples for classification are limited in number. Moreover, the dimensionality of input data can be very high, especially when dealing with physical fields discretized on a mesh (finite-difference methods~\cite{smith1985numerical}, finite-element method~\cite{ern2013theory}, finite-volume method~\cite{versteeg2007introduction}) or with bond graphs modeling engineering systems~\cite{borutzky2011bond}.

When classification data are high-dimensional, dimensionality reduction techniques can be applied to reduce the amount of information to be analyzed by the classifier. For classification problems where the dimension of the input data is higher than the number of training examples, dimensionality reduction is crucial to avoid overfitting. In addition, when considering physical fields discretized on a mesh, the dimension of the input space can reach $10^6$ to $10^8$ for industrial problems. In such cases, the input data are too hard to manipulate, which dramatically slows down the training process for the classifier and thus restrains the exploration of the hyperparameters space, as it requires multiple runs of the training process with different values for the hyperparameters. Applying data augmentation techniques to increase the number of examples in the training set is also impossible, as it would cause memory problems. Therefore, dimensionality reduction is recommended not only for reducing the risk of overfitting, but also for facilitating the training phase and enabling data augmentation. 

\textit{Feature selection}~\cite{CHANDRASHEKAR201416} aims at decreasing the number of features by selecting a subset of the original features. It differs from \textit{feature extraction}, where new features are created from the original ones (e.g. Principal Component Analysis, PCA, and more generally encoders taken from undercomplete autoencoders~\cite{Goodfellow-et-al-2016}). Feature selection can be seen as applying a mask to a high-dimensional random vector to get a low-dimensional random vector containing the most relevant information. It is preferred over autoencoders when interpretability is important~\cite{janecek08}. Furthermore, contrary to undercomplete autoencoders trained with the mean squared error loss, most feature selection algorithms do not intend to find reduced features enabling the reconstruction of the input: features are selected for the purpose of predicting class labels, which makes these algorithms more goal-oriented for supervised learning tasks.

Among the existing feature selection algorithms, univariate filter methods consist in computing a score for each feature and ranking the features according to their scores. The score measures how relevant a feature is for the prediction of the output variable. If $N_{f}$ is the target number of features, then the $N_f$ features with the highest scores are selected, and the others are discarded. The major drawback of univariate filter methods is that they do not account for relations between the selected features. The resulting set of selected features may then contain redundant features. To address this issue, the \textit{minimum redundancy maximum relevance} (mRMR) algorithm~\cite{mRMR2003, mRMR2005} tries to find a tradeoff between relevance and redundancy. However, for very large numbers of features like in computational physics, evaluating the redundancy is very computationally demanding. Fortunately, working on physics data provides other possibilities to define a redundancy measure. In this paper, we propose a new feature selection algorithm suitable for features coming from the same physical quantity but corresponding to different points in a space-time discretization. It is assumed that this physical quantity, defined as a function of space and/or time, has some smoothness properties. This is often the case in physics, where the physical quantity satisfies partial differential equations and boundary conditions. In~\cite{AFST_1989_5_10_2_325_0}, it is shown that the solution of Poisson's equation on a Lipschitz domain in $\mathbb{R}^3$ with a $L^2$ source term and Dirichlet or Neumann boundary conditions is continuous. Poisson's equation is well-known in physics, and can be found for example in electrostatics, in Gauss's law for gravity, in the stationary heat equation, and in the stationary particle diffusion equation. If the features of a random vector contain the discretized values of a smooth function of space and time, then their correlations are related to their proximities on the space-time grid. The approach presented in this paper is depicted as a geostatistical variant of mRMR algorithm, in the sense that it consists in modeling the redundancy as a function of space and time.

Once the dimension of the input space is reduced, another challenge of the classification problems encountered in computational physics must be addressed: the lack of training data. \textit{Data augmentation} refers to techniques aiming at enlarging the training set by generating new examples from the original ones. For image classification, many class-preserving operations can be used to create new images, such as translations, rotations, cropping, scaling, and changes in colors, brightness and contrast. Unfortunately, these common techniques cannot be used when considering physics data. For this type of data, new examples can be generated using generative adversarial networks (GAN~\cite{GenerativeAdversarialNetworks}, see~\cite{AkkariGAN2020} for the use of deep convolutional GANs in computational fluid dynamics). However, training GANs is quite complex in practice and may also be made more difficult by the lack of training examples. More simply, new data can be generated by convex combinations of the original examples. SMOTE~\cite{SMOTE} takes convex combinations of input data with their nearest neighbors in the input space. ADASYN~\cite{ADASYN} uses the same idea but focuses more on examples that are hard to learn, \textit{i.e.} those having examples of a foreign class in their neighborhoods. Both data augmentation algorithms use k-nearest neighbors algorithm and thus compute Euclidean distances in the input space. When working on high-dimensional physics data, this approach may suffer from the curse of dimensionality~\cite{Bellman61}. In addition, defining neighborhoods with the Euclidean distance in the input space is not always appropriate, since dictionary-based ROM-nets use physics-aware dissimilarities to label the data, such as distances on the primal variable or on a quantity of interest. The data augmentation algorithm developed in this article consists in growing sets around original examples by incrementally adding nearest neighbors in terms of the dissimilarity measure used for the automatic data labeling procedure. These sets are used to generate new data by convex combinations. Contrary to SMOTE and ADASYN, the risk of generating new data with wrong labels is controlled by checking that the convex hulls of the growing sets do not contain any example belonging to a foreign class.

In sum, the contributions of this paper are motivated by difficulties encountered in our previous work on ROM-nets~\cite{ROM-net}. These difficulties are inherent to classification tasks on simulation data and can be summarized in three main issues:
\begin{itemize}
\item the lack of training data due to the expensive data labeling procedure involving simulations with a high-fidelity model (risk of overfitting);
\item the high dimensionality of input data (risk of overfitting);
\item most common data augmentation techniques are not applicable to physics data. 
\end{itemize}
The feature selection and data augmentation strategies introduced in this paper are developed to tackle these difficulties. Classification problems encountered in computational physics are described in Section~\ref{SectionClassification}. Section~\ref{SectionDefOurPb} presents the classification problem studied in this paper. The feature selection algorithm is described in Section~\ref{SectionFS} and is shown to efficiently remove irrelevant and redundant features. Section~\ref{SectionDA} presents the data augmentation algorithm, which successfully generates a large amount of new data with correct labels. Finally, Section~\ref{SectionValidation} evaluates both algorithms in conjunction with 14 different classifiers. On our classification task, the average accuracy gain due to data augmentation is $4.98\%$. Using ensemble methods on classifiers combined with our algorithms enables reaching a classification accuracy of $90\%$.

\section{Classification in the context of numerical modeling}
\label{SectionClassification}

\subsection{Classification: a brief review}

Supervised learning is the task of learning the correspondence between input data $X$ and outputs $Y$ from a training set of input-output pairs $\{ (x_{i},y_{i})\}_{1 \leq i\leq N}$. Supervised machine learning problems fall into two categories: regression problems, for which the outputs take continuous values, and classification problems, consisting in the prediction of categorical labels. This paper focuses on the latter, with the additional assumptions that $X$ is a continuous multivariate random variable having a probability density function $p_{X} : \mathcal{X} \rightarrow \mathbb{R}_{+}$, and that any observation $x\in\mathcal{X}$ is associated to a single label $y$. The discrete random variable $Y$ follows a categorical distribution (or multinoulli distribution) whose probability mass function is defined by:
\begin{equation}
\forall y \in \mathbb{R}, \quad p_{Y}(y) = \sum_{k=1}^{K} \Prob_{Y}(k) \delta (y-k)
\label{PMFY}
\end{equation}
\noindent where $K$ is the number of categories (or classes), $\delta$ is the Dirac delta function, and $\Prob_{Y}(k)$ denotes the probability of the event $Y=k$ for a given label $k \in [\![ 1;K ]\!]$. The labeled training data are drawn from the joint probability distribution $p_{X,Y}$, called the \textit{data-generating distribution}. As $X$ is continuous and $Y$ is discrete, $p_{X,Y}$ is a mixed joint density and can be obtained with the formula:\begin{equation} p_{X,Y}(x,y) = p_{Y} (y) \ p_{X\vline Y}(x \ | \ y) = \sum_{k=1}^{K} \Prob_{Y}(k) \delta (y-k) p_{X\vline Y}(x \ | \ y)
\label{jointDensity}
\end{equation} \noindent with $p_{X\vline Y}$ being the class-conditional probability distribution.

In the present paper, we are interested in single-label multiclass problems. Hence, the classification problem considered here reads: \textit{given an integer $K \geq 2$ and a training set $\{ (x_{i},y_{i})\}_{1 \leq i\leq N} \subset \mathcal{X}\times [\![ 1;K ]\!]$, train a classifier $\mathcal{C}( \ . \ ; \theta):\mathcal{X} \rightarrow [\![ 1;K ]\!]$ to assign any observation $x\in\mathcal{X}$ to the correct class, with $\theta$ denoting the parameters of the classifier}. However, reaching the highest possible accuracy on the training set is not the objective to be pursued, since it usually leads to \textit{overfitting}. Indeed, the classifier is supposed to be applied to new unseen data, or \textit{test} data, after the training phase. Therefore, the generalization ability of the classifier is at least as important as its performance on the training set. A classifier with high capacity\footnote{Ability to learn classes with complex boundaries (related to model complexity).} perfectly fits training data but is very sensitive to noise, leading to high test error and thus overfitting. On the other hand, a classifier with low capacity can produce smaller error gaps between training and test predictions, but such a classifier may not be able to fit the data, which is called \textit{underfitting}. This dilemma is known as the \textit{bias-variance tradeoff}: low model capacity leads to high bias, while high model capacity leads to high variance.

For a given observation $x\in\mathcal{X}$, probabilistic classification algorithms estimate the membership probabilities $\Prob_{\textrm{model}} \left( y \ \vline \ x ; \theta\right)$ for each class $y\in [\![ 1;K ]\!]$. The classifier $\mathcal{C}$ returns the index of the class with the highest membership probability:
\begin{equation}
\mathcal{C}(x ; \theta) = \argmax_{y\in[\![1;K]\!]} \left( \Prob_{\textrm{model}} \left( y \ \vline \ x ; \theta\right) \right)
\end{equation}

\noindent The parameters $\theta$ must be optimized to minimize the \textit{expected risk} $\mathcal{J}(\theta)$ defined by:
\begin{equation}
\mathcal{J}(\theta) = \E_{(X,Y)\sim p_{X,Y}} \left[ L \left( \mathcal{C}(X ; \theta),Y \right) \right]
\end{equation}
\noindent where $L$ is the per-example loss function quantifying the error between the predicted class $\mathcal{C}(X ; \theta)$ and the true class $Y$. However, as the true data-generating distribution $p_{X,Y}$ is unknown, the expected risk must be estimated by computing the expectation with respect to the empirical distribution $\widehat{p}_{X,Y}$:
\begin{equation}
\widehat{p}_{X,Y} (x,y) = \frac{1}{N} \sum_{i=1}^{N} \delta (x - x_i , y - y_i )
\end{equation}
\noindent Therefore, the training process consists in minimizing the \textit{empirical risk}:
\begin{equation}
\widehat{\mathcal{J}}(\theta) = \E_{(X,Y)\sim \widehat{p}_{X,Y}} \left[ L \left( \mathcal{C}(X;\theta),Y \right) \right] = \frac{1}{N} \sum_{i=1}^{N} L \left( \mathcal{C}(x_{i};\theta),y_{i} \right)
\end{equation}
This is known as the \textit{empirical risk minimization} (ERM) principle~\cite{Vapnik1998}. Common choices for the function $L$ are the hinge loss (defined for multiclass problems in~\cite{10.5555/944790.944813}) used by support vector machines (SVMs), and the log loss or negative log-likelihood:
\begin{equation}
L \left( \mathcal{C}(x;\theta),y \right) = - \log \ \Prob_{\textrm{model}} \left( y \ \vline \ x ; \theta\right)
\end{equation}
\noindent widely used for classifiers based on artificial neural networks (ANNs) and for logistic regression. When $L$ is the negative log-likelihood, the objective function $\widehat{\mathcal{J}}(\theta)$ is the cross-entropy loss and the optimal set of parameters $\theta^{*}$ minimizing $\widehat{\mathcal{J}}$ is the maximum likelihood estimator~\cite{Hastie2005TheEO}. Usually, a regularization term is added to the empirical risk to penalize the model complexity in order to reduce overfitting.

The boundaries between classes in the input space are called decision boundaries. Linear classifiers are classification algorithms for which the decision boundaries are defined by linear combinations of the features of $X$. Linear classifiers are appropriate when the classes are linearly separable in $\mathcal{X}$, which means that the decision boundaries correspond to portions of hyperplanes. Linear classifiers include logistic regression~\cite{10.2307/2280041, 10.2307/2983890, multinomialLR}, linear discriminant analysis (LDA~\cite{Hastie2005TheEO}), and the linear support vector classifier (linear SVM~\cite{10.1023/A:1022627411411, OptimalMarginClassifier}).

Many algorithms exist for nonlinear classification problems, each of them having its own advantages and drawbacks. As a kernel method, the linear SVM is extended to nonlinear classification problems using the \textit{kernel trick} based on Mercer's theorem~\cite{doi:10.1098/rsta.1909.0016}. Artificial neural networks~\cite{Ivakhnenko1966CyberneticPD, Joseph1961} (see~\cite{Schmidhuber2015DeepLI} for a historical review) have become very popular due to their performances in numerous classification contests. Decision trees (e.g. CART algorithm~\cite{Breiman1983ClassificationAR}) and naive Bayes classifiers~\cite{NaiveBayes1, NaiveBayes2} are well-known for their interpretability. Other nonlinear classifiers include the k-nearest neighbors algorithm (kNN~\cite{1053964}), and quadratic discriminant analysis (QDA~\cite{Hastie2005TheEO}). In~\cite{ComparisonSupervisedLearningAlgos}, the most common classifiers are compared on eleven binary classification problems. Short reviews of classification algorithms can be found in~\cite{ReviewClassificationKotsiantis, ReviewClassificationPerezOrtiz}.

Usually, combining several models to form a meta-estimator results in more robust predictions and reduces overfitting. This idea is used in ensemble methods such as bagging (or bootstrap aggregating)~\cite{Breiman1996BaggingP}, feature bagging (or random subspace method)~\cite{Ho1998TheRS}, stacking~\cite{Hastie2005TheEO, Wolpert1992StackedG}, boosting (including the well-known AdaBoost algorithm~\cite{Freund1995ADG, Hastie2009MulticlassA}), gradient boosting~\cite{GradientBoosting1, GradientBoosting2, GradientBoosting3, GradientBoosting4}, and voting classifiers based on either a majority vote or a soft vote (technique known as ensemble averaging~\cite{Haykin99}). Random forests~\cite{Breiman2001RandomF} combine bagging and feature bagging to build an ensemble of decision trees.

\subsection{Classification for numerical simulations}

Classification algorithms have recently found applications in numerical simulations, and more specifically for the selection of numerical models adapted to the context of the simulation. In this case, the class labels are used to identify the models.

Applications to turbulence modeling in computational fluid dynamics can be found in~\cite{Maulik19,MAULIK2020132409}. In large eddy simulations (LES, see~\cite{meneveau2006large}), the Navier-Stokes equations are filtered to avoid resolving small-scale turbulent structures whose effects are taken into account either by sub-grid scale models (explicit LES closures) or via the dissipation induced by numerical schemes (implicit LES). In~\cite{Maulik19}, sub-grid statistics obtained from direct numerical simulations enable training a fully-connected deep neural network to switch between different explicit LES closures at any point of the grid. This classifier is reused in~\cite{MAULIK2020132409}, this time for switching between different numerical schemes in implicit LES. In both cases, the classifier is used to increase the accuracy of numerical predictions.

The idea of locally switching between different simulation strategies can also be found in~\cite{10.3389/fmats.2019.00075} for the multiscale modeling of composite materials. In the multilevel finite-element method ($\textrm{FE}^2$~\cite{FEYEL1999344}), the quantities of interest at every integration point of the macroscopic finite-element mesh are given by a microscopic finite-element computation of an elementary cell representing the material's microstructure. The multi-fidelity surrogate model presented in~\cite{10.3389/fmats.2019.00075} relies on two surrogate models replacing the microscopic finite-element model, namely a reduced-order model taken from~\cite{FRITZEN2018201} and an artificial neural network based regression model. At each integration point of the macroscopic mesh, the classifier (a fully-connected network) analyzes the effective strains and predicts whether the error of the regression model would be acceptable, enabling the selection of either the purely data-driven regression model or the more sophisticated physics-driven ROM. This time, automatic model recommendation by a classifier is used to adapt the model complexity and reduce the computation time.

In~\cite{doi:10.2514/6.2020-0418, kapteyn2020physicsbased}, optimal classification trees (OCTs~\cite{OCT}) are used  as \textit{model selectors} in a data-driven physics-based digital twin of an unmanned aerial vehicle (UAV). The OCTs enable the update of the digital twin according to sensor data by selecting a model from a predefined model library. In this context, the training procedure for the classifier corresponds to an inverse problem. Indeed, training examples are generated by running simulations with all the models in the library and evaluating their predictions at the sensors' locations. Hence, for a given model $y\in [\![ 1;K ]\!]$, the data $x$ are obtained by means of numerical simulations performed with $y$. This corresponds to the forward mapping. The classifier must learn the inverse mapping giving $y$ as a function of $x$. In this example, data labeling is straightforward: the label of a training example $x$ is given by the index $y$ of the model which was used to generate $x$. It is also noteworthy that generating training examples is not too expensive, because numerical simulations are performed with reduced-order models obtained by the Static-Condensation Reduced-Basis-Element
method (SCRBE~\cite{SCRBE1, SCRBE2, SCRBE3, SCRBE4}). In this application, automatic model recommendation gives the UAV the ability to dynamically evaluate its flight capability and replan its mission accordingly.

Another example of classifier used to accelerate numerical simulations can be found in~\cite{LDEIM2014}. Contrary to~\cite{doi:10.2514/6.2020-0418, kapteyn2020physicsbased}, the data labeling procedure relies on the clustering of simulation data. In this framework, the model library is made of cluster-specific DEIM\footnote{Discrete Empirical Interpolation Method.}~\cite{DEIM} models that are faster than the high-fidelity model. The high-fidelity model computes a prediction $u_i$ for each input $x_i$ in the database $\{x_{i}\}_{1 \leq i \leq N}$, resulting in a dataset $\{u_{i}\}_{1 \leq i \leq N}$ on which a clustering algorithm is applied. The predicted variable $u$ is the discretization of a continuous field on a finite-element mesh, thus living in a high-dimensional space. To avoid the so-called \textit{curse of dimensionality}~\cite{Bellman61}, a DEIM-based feature selection technique is used before applying k-means clustering~\cite{kmeans}. Alternatively, the clusters can be obtained with a variant of k-means using the DEIM residual as clustering criterion. Then, for a given training example $x_i$, the class label $y_i$ is defined by the index of the cluster that $u_i$ is assigned to. In the exploitation phase, when dealing with test data, the best DEIM model is selected by a nearest neighbor classifier. The input data given to the classifier are either parameters of the problem or the variable $u$ obtained at the previous time increment. A similar methodology is described in~\cite{RyckelynckComputerVision}, where the concept of model library is termed \textit{model dictionary}, which is the terminology adopted in this paper. The model dictionary is made of hyper-reduced-order models~\cite{Ryckelynck2005}, and the input data $\{x_{i}\}_{1 \leq i \leq N}$ are images of a mechanical experiment. The dimensionality of simulation data is reduced by Principal Component Analysis (PCA) before using k-means clustering. A convolutional neural network~\cite{reviewCNN} is trained to return class labels without computing the intermediate variable $u$ in order to avoid time-consuming operations. This classifier is an approximation of the \textit{true classifier} $\mathcal{K}$ returning the correct label for any input $x$.

\section{Definition of the classification problem}
\label{SectionDefOurPb}

\noindent \textbf{Notations:} the $j$-th feature of a random vector $X$ is the real-valued random variable denoted by $X^j$. Its observations are denoted by $x^j$, or $x^{j}_{i}$ when indexing is necessary, for example when considering training data. When $X$ is obtained by discretizing a random field on a mesh, the feature $X^j$ corresponds to the value taken by the random field at the $j$-th node. In the numerical application presented in this work, a random temperature field is considered. The spatial coordinates of the $j$-th node are stored in a vector $\bs{\xi}_j \in \mathbb{R}^3$. The categorical variable $Y$ indicates which model should be used. \qed

In this paper, input data $\{x_{i}\}_{1 \leq i \leq N}$ correspond to several instances or \textit{variabilities} of a physical field discretized on a mesh. Let $\mathcal{N}$ be the number of nodes in the mesh. If the physical field is scalar and defined at the nodes, then each observation $x_i$ is a vector of $\mathbb{R}^{\mathcal{N}}$. For relatively small problems, $\mathcal{N}$ is in the order of $10^4$ to $10^5$. For some industrial problems, $\mathcal{N}$ can be in the order of $10^6$ to $10^8$. The dataset $\{x_{i}\}_{1 \leq i \leq N}$ may come from experiments, numerical simulations, statistical models, or a combination of them, and contains from $10^2$ to $10^4$ observations. It is assumed that all features of all observations are known, contrary to some classification tasks in other disciplines encountering the problem of missing values. This assumption is clearly satisfied when data come from numerical simulations or statistical models. For experimental data, numerous techniques provide space-distributed measurements that can be projected onto the mesh, such as particle image velocimetry~\cite{adrian2011particle} in fluid dynamics, digital image correlation~\cite{DIC85} and photoelastic experiments~\cite{doi:10.1063/1.1745316} in solid mechanics, and temperature-sensitive paints~\cite{dlr47882} measuring surface temperatures.

The framework considered in this paper is the same as in~\cite{ROM-net} for ROM-nets, where the input variabilities are supposed to be used for an uncertainty propagation study in a physics problem $\mathcal{P}$, for which a high-fidelity model $m_{HF}$ is available. The physics problem $\mathcal{P}$ is a time-dependent problem. As the high-fidelity model is too computationally expensive, dictionary-based ROM-nets have been introduced to reduce the computation time by means of a reduced-order model dictionary and a classifier playing the role of a model selector. The dictionary-based ROM-net is trained on the available dataset $\{x_{i}\}_{1 \leq i \leq N}$. For a given observation $x_i$, the class label $y_i$ indicates the most appropriate model in the dictionary to be used for fast simulations with limited errors with respect to the high-fidelity model $m_{HF}$. Class labels are obtained by the following data labeling procedure:
\begin{itemize}
\item \textbf{Step 1:} for each observation $x_i$ in the dataset, use the high-fidelity model $m_{HF}$ to solve a simplified version $\mathcal{P}'$ of the physics problem $\mathcal{P}$ (for example, the problem $\mathcal{P}'$ can consist in solving $\mathcal{P}$ for a few time increments only). The primal solution of $\mathcal{P}'$ computed for $x_i$ is denoted by $u_i$. It consists of a collection $\{u_{i}^{n}\}_{1 \leq n \leq n_t}$ of $n_t$ fields defined on the mesh, with $n_t$ being the number of time increments in problem $\mathcal{P}'$.
\item \textbf{Step 2:} given $\{u_{i}\}_{1 \leq i \leq N}$, compute the dissimilarity matrix $\bs{\delta}\in\mathbb{R}^{N \times N}$ with the following formula:$$\delta_{ij} = \delta (x_i , x_j ) = d_{\textrm{Gr}(\infty, \infty)}\left( \textrm{span}(\{u_{i}^{n}\}_{1 \leq n \leq n_t}), \textrm{span}(\{u_{j}^{n}\}_{1 \leq n \leq n_t}) \right)$$ with $d_{\textrm{Gr}(\infty, \infty)}$ being the Grassmann metric defined in~\cite{Grassmann}. The coefficient $\delta_{ij}$ is a dissimilarity measure between $x_i$ and $x_j$.
\item \textbf{Step 3:} k-medoids clustering~\cite{kMedoids1, kMedoids2, kMedoids3} is applied to the dissimilarity matrix $\bs{\delta}$. In this paper, we consider $K=4$ clusters. The label $y_{i} = \mathcal{K}(x_i) \in [\![ 1;K ]\!]$ is given by the index of the cluster containing $u_i$.
\end{itemize}

This procedure gives $N=1000$ examples of input-label pairs $\{(x_{i},y_{i})\}_{1 \leq i \leq N}$. This dataset is split in a training set, a validation set and a test set with cardinalities $600$, $200$ and $200$ respectively, enabling the supervised training and evaluation of a classifier $\mathcal{C}$. For the sake of simplicity, the labeled data are renumbered so that the $N_{\textrm{train}}=600$ first input-output pairs $\{(x_{i},y_{i})\}_{1 \leq i \leq N_{\textrm{train}}}$ form the training set on which the feature selection and data augmentation algorithms presented in this paper are trained.

In this work, the physics problem $\mathcal{P}$ is a temperature-dependent mechanical problem. The structure is made of an elasto-viscoplastic material whose behavior depends on the local value of the temperature field~\cite{CHABOCHE20081642}. The random variable $X$ is a random vector representing the evaluation of the random temperature field on a finite-element mesh containing $\mathcal{N}=42445$ nodes (see Figure~\ref{FEmesh}). The structure is subjected to centrifugal forces and pressure loads. The random temperature fields are generated by a stochastic model described in~\cite{ROM-net}, where ten fluctuation modes are randomly combined and superposed to a reference temperature field. The realizations of the random temperature field are continuous and always satisfy the heat equation. Modeling random fields as random combinations of deterministic spatial functions is quite common when
studying stochastic partial differential equations~\cite{MATTHIES1997283, Sudret2000, MATTHIES20051295}, because a random field can be approximated by truncating its Karhunen-Lo\`eve expansion~\cite{10.1007/s00607-008-0018-3}.

As already stated, the main contributions of this paper are a feature selection strategy and a data augmentation algorithm adapted to the specificities and difficulties of classification problems encountered when training dictionary-based ROM-nets. Concerning feature selection, the main focus is on the fast quantification of features redundancy by taking advantage of the type of input data. Concerning data augmentation, in addition to the constraints that have already been mentioned, it is likely that transforming an input example $x_i$ substantially modifies the intermediate variable $u_i$, and thus the class label $y_i$ might no longer be relevant for the transformed input. Avoiding this situation is crucial to ensure that the augmented data are correctly labeled. Our algorithms are applicable under the assumptions that the random vector $X$ derives from a random field whose realizations are continuous with probability one (sample path continuity, see Definition~2.1 in~\cite{abrahamsen1997review}) and belong to a convex domain~$\mathcal{X}$ related to physics constraints. Lastly, a comparison of various classification algorithms is conducted to put into perspective the choice made in~\cite{ROM-net} to use an ensemble of deep neural networks trained with different architectures and loss functions.

\begin{figure}[!h]
\centering
\includegraphics[scale=0.6]{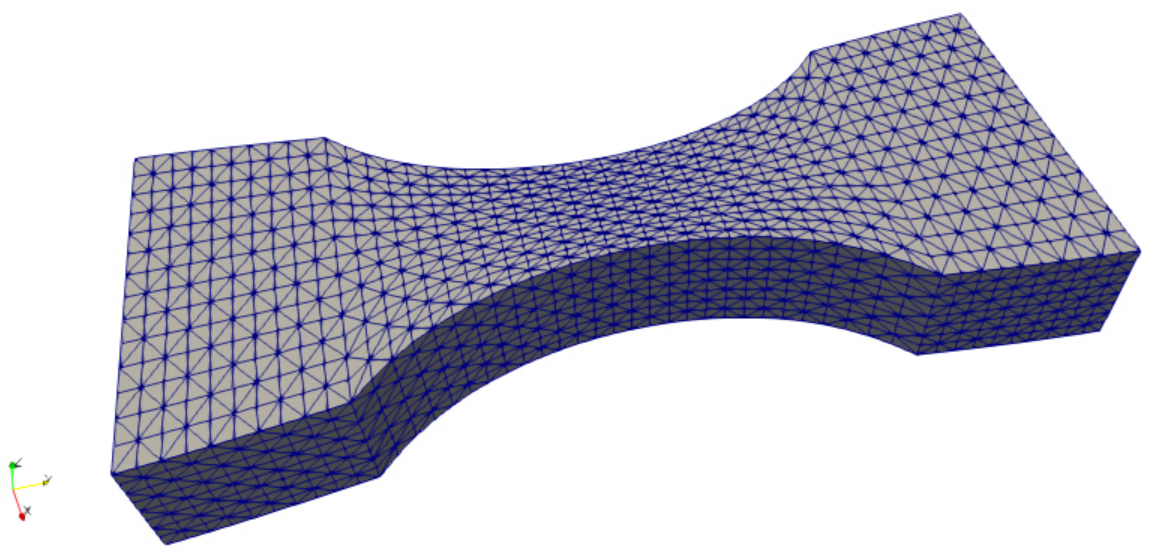}
\caption{Finite-element mesh of the structure considered in this paper.}
\label{FEmesh}
\end{figure}

\begin{remark}
Another strategy would consist in using a regression algorithm for the classification task. Indeed, since our data labeling procedure is based on clustering, the classification problem could be replaced by a regression problem for the prediction of dissimilarities $\{ \delta (x, \tilde{x}_k ) \}_{1 \leq k \leq K}$ for $x\in\mathcal{X}$, with $\tilde{x}_k$ being the medoid of the $k$-th cluster. Given these distances for a new observation $x$, the class label is obtained by taking the integer $k\in [\![ 1;K ]\!]$ associated to the smallest dissimilarity $\delta (x, \tilde{x}_k )$. However, the data augmentation algorithm presented in this paper is not compatible with regression algorithms. For this reason, this paper focuses on classifiers rather than regressors.
\end{remark}

\section{Feature selection}
\label{SectionFS}

\subsection{Feature selection based on mutual information}

We recall that a projection $\pi$ is a linear map satisfying $\pi \circ \pi = \pi$. It is entirely defined by its kernel and its image, which are complementary: given two complementary vector subspaces $V_1$ and $V_2$, there is a unique projection $\pi$ whose kernel is $V_1$ and whose image is $V_2$, namely the projection onto $V_2$ along $V_1$. For more details about projections, see~\cite{Meyer2000}, pages 385 to 388. Let us now give a formal definition of a \textit{feature selector}:

\begin{definition}
\textit{(Feature selector)} Let $V$ be a finite-dimensional real vector space. Given a basis $\mathcal{B} = (e_i)_{1 \leq i \leq \dim(V)}$ of $V$ and a set of integers $S \subset [\![ 1; \dim (V) ]\!]$, the \textit{feature selector} $\pi_{S,\mathcal{B}}: V \rightarrow V$ is the projection whose image is $\emph{\textrm{span}}\left(\{ e_i \}_{i \in S} \right)$ and whose kernel is $\emph{\textrm{span}}\left(\{ e_i \}_{i \in [\![ 1; \dim (V) ]\!] \setminus S} \right)$.
\end{definition}

\noindent When the choice of the basis $\mathcal{B}$ is obvious, the notation $\pi_{S,\mathcal{B}}$ is simply replaced by $\pi_{S}$. In practice:
\begin{equation}
\forall (\lambda_i)_{1 \leq i \leq \dim(V)} \in \mathbb{R}^{\dim(V)}, \quad \pi_{S} \left(  \sum_{i=1}^{\dim(V)} \lambda_i e_i \right) = \sum_{i\in S} \lambda_i e_i 
\end{equation} Therefore, from a numerical point of view, one can interpret the feature selector as linear map $\pi_{S}: V \rightarrow \textrm{span}\left(\{ e_i \}_{i \in S} \right)$, which enables reducing the size of the vector representing $\pi_{S}(x)$ for $x\in V$. In this way, applying a feature selector $\pi_{S}$ to a vector of $\mathbb{R}^\mathcal{N}$ consists in masking its features whose indexes are not in $S$, which gives a reduced vector in $\mathbb{R}^{|S|}$ where $|S|$ denotes the number of elements in $S$. Feature selection algorithms build the set $S$ by searching for the most relevant features for the prediction of the output variable $Y$. For this purpose, the mutual information can be used to quantify the degree of the relationship between variables:

\begin{definition}
\textit{(Mutual information~\cite{cover2012elements}, eq. 8.47, p. 251)} Let $Z^1$ and $Z^2$ be two real-valued random variables with joint probability distribution $p_{1,2}$ and marginal distributions $p_1$ and $p_2$. The \textit{mutual information} $I(Z^1, Z^2)$ is defined by:\begin{equation}
I \left( Z^1, Z^2 \right) = \int_{\mathbb{R}^2} p_{1,2}(z^1, z^2) \log \left( \frac{p_{1,2}(z^1, z^2)}{p_{1}(z^1) p_{2}(z^2)} \right) dz^1 dz^2
\end{equation}
\end{definition}

The mutual information measures the mutual dependence between two random variables. Contrary to correlation coefficients, the information provided by this score function is not limited to linear dependence. The mutual information is nonnegative, and equals to zero if and only if the random variables are independent. Given Equation~\eqref{jointDensity}, replacing $Z^1$ by a feature $X^i$ of $X$ and $Z^2$ by $Y$ gives: \begin{equation}
I \left( X^{i}, Y \right) = \sum_{k=1}^{K} \Prob_{Y}(k) \int_{x^{i}\in\mathbb{R}} p_{X^{i} | Y}(x^{i} | k) \log \left( \frac{p_{X^{i} | Y}(x^{i} | k)}{p_{X^{i}}(x^{i})} \right) dx^{i}
\end{equation}

\noindent The mutual information can be used to quantify the redundancy of a set of features $S$ and its relevance for predicting $Y$:

\begin{definition}
\textit{(Relevance~\cite{mRMR2005}, eq. 4, p. 2)} Let $X=(X^i)_{1 \leq i \leq \mathcal{N}}$ be a multivariate random variable, and let $Y$ be a discrete random variable. The \textit{relevance} of a reduced set $S \subset [\![ 1; \mathcal{N} ]\!]$ of features of $X$ for predicting $Y$ is defined by: \begin{equation}
D(S,Y) = \frac{1}{\vline S \vline} \sum_{i\in S} I(X^i , Y)
\end{equation}
\end{definition}

\begin{definition}
\textit{(Redundancy~\cite{mRMR2005}, eq. 5, p. 2)} Let $X=(X^i)_{1 \leq i \leq \mathcal{N}}$ be a multivariate random variable. The \textit{redundancy} of a reduced set $S \subset [\![ 1; \mathcal{N} ]\!]$ of features of $X$ is defined by; \begin{equation}
R(S) = \frac{1}{\vline S \vline ^{2}} \sum_{i,j\in S^2} I(X^i , X^j)
\end{equation}
\end{definition}

\noindent The \textit{minimum redundancy maximum relevance} (mRMR) algorithm~\cite{mRMR2003, mRMR2005} builds the set $S$ by maximizing $D(S,Y) - R(S)$, which is a combinatorial optimization problem. For this type of optimization problem, a brute-force search is intractable, because the number of solution candidates is too large. Instead, mRMR searches for a sub-optimal solution by following a greedy approach. First, the feature having the highest mutual information with the label variable $Y$ is selected. Then, the algorithm follows an incremental procedure: given the set $S_{m-1}$ obtained at iteration $m-1$, form the set $S_m$ such that:
\begin{equation}
S_m = S_{m-1} \cup \left\lbrace \underset{i\in [\![ 1; \mathcal{N} ]\!] \setminus S_{m-1}}\argmax \ \left( I(X^i , Y) - \frac{1}{m-1} \sum_{j \in S_{m-1}} I(X^i, X^j) \right) \  \right\rbrace
\end{equation}
\noindent This incremental procedure stops when $m$ reaches the target number of features $N_f$. A review of feature selection algorithms based on mutual information can be found in~\cite{MIbasedFSreview}.

\subsection{A geostatistical variant of mRMR feature selection}

When training dictionary-based ROM-nets, the number of features of the random vector $X$ scales with the number of nodes $\mathcal{N}$ in the mesh. In particular, the number of features is exactly $\mathcal{N}$ if $X$ is the nodal representation of a scalar field. Hence, there are too many features to compute all redundancy terms $I(X^i, X^j)$. However, one can estimate the redundancy terms thanks to the proximities of the features on the mesh. Indeed, $X$ is a regionalized variable: in our example, we recall that $\bs{\xi}_{i} \in \mathbb{R}^{3}$ denotes the position of the $i$-th node in the mesh, and that the feature $X^i$ corresponds to the value taken by a random temperature field at $\bs{\xi}_{i}$. If two points $\bs{\xi}_{i}$ and $\bs{\xi}_{j}$ of the mesh are close to each other, the corresponding features $X^i$ and $X^j$ are likely to be correlated and thus redundant because of the smoothness of the temperature field. This idea is also valid when considering physical variables discretized in time. 

In this paper, the random temperature field is modeled by a Gaussian random field~\cite{abrahamsen1997review} as in~\cite{ROM-net}, which is a common and simple approach when modeling uncertainties on a physical field. As a consequence, $X$ is a Gaussian random vector and the mutual information $I(X^i, X^j)$ has a simple formula involving the correlation coefficient: 

\begin{proper}
\textit{(Mutual information of two correlated Gaussian random variables~\cite{cover2012elements}, eq.~8.56, p.~252)} Let $(X^1, X^2)$ be a Gaussian random vector. The mutual information $I(X^1, X^2)$ reads:\begin{equation}
I(X^1, X^2) = -\frac{1}{2} \ln \left( 1 - \rho^2  \right)
\label{MIgaussianCase}
\end{equation} \noindent where $\rho$ denotes the correlation between $X^1$ and $X^2$.
\end{proper}

This property implies that, for Gaussian random fields having isotropic correlation functions\footnote{The correlation function $\rho(\bs{\xi}, \bs{\xi}')$ of a random field is isotropic if it only depends on the distance $|| \bs{\xi} - \bs{\xi}' ||_2$.} $\rho$, the mutual information $I(X^i , X^j )$ only depends on the distance $|| \bs{\xi}_{i} - \bs{\xi}_{j} ||_2$. A wide variety of isotropic correlation functions are given in~\cite{abrahamsen1997review}. More generally, since Equation~\eqref{MIgaussianCase} is an increasing function of $\rho^2$, any isotropic upper (resp. lower) bound of the squared correlation function gives an isotropic upper (resp. lower) bound of the mutual information.

\begin{figure}[!h]
\centering
\includegraphics[scale=0.6]{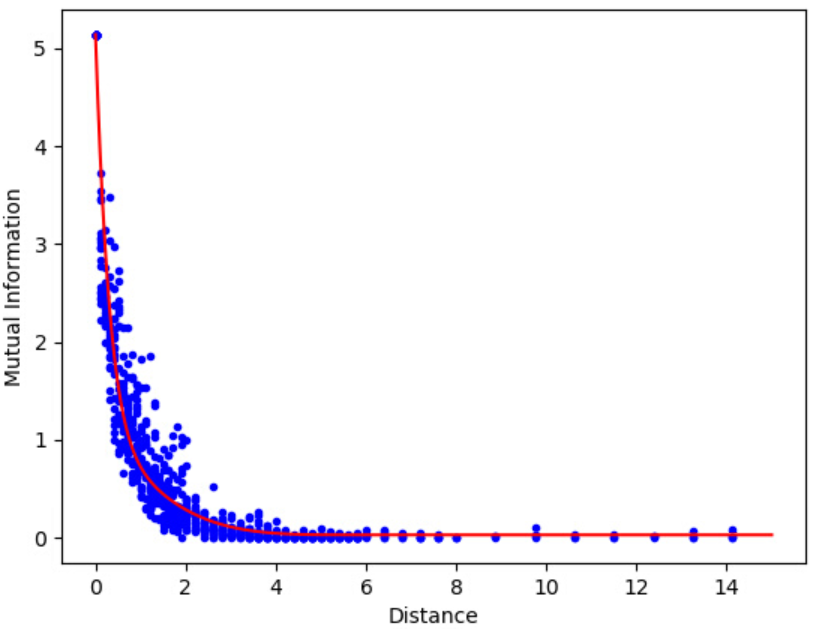}
\caption{Mutual information $I(X^i,X^j)$ as a function of the distance $|| \bs{\xi}_{i} - \bs{\xi}_{j} ||_2$.}
\label{MIvsDist}
\end{figure}

For the example studied in this paper, Figure~\ref{MIvsDist} shows that the mutual information $I(X^i , X^j )$ decreases as the corresponding distance $|| \bs{\xi}_{i} - \bs{\xi}_{j} ||_2$ increases. Therefore, our feature selection algorithm builds a metamodel $\tilde{I}$ replacing $I(X^i, X^j)$ by a function of the distance $|| \bs{\xi}_{i} - \bs{\xi}_{j} ||_2$, which drastically reduces the computational cost of mRMR algorithm for our particular problem. First of all, one must build a design of experiments (DOE) to select a few terms $I(X^i, X^j)$ to be computed exactly. The metamodel $\tilde{I}$ is calibrated to fit the corresponding precomputed redundancy terms. Then, mRMR feature selection is applied by replacing $I(X^i, X^j)$ by $\tilde{I}( || \bs{\xi}_{i} - \bs{\xi}_{j} ||_2 )$. The feature selection algorithm is described in Algorithm~\ref{alg:AlgoFS}. We call this algorithm \textit{geostatistical mRMR}, since geostatistics is the branch of statistics that deals with regonalized variables. A stopping criterion is added to the incremental procedure used in mRMR, enabling an automatic selection of the number of features to be kept for the classification task: the algorithm stops when the value of $\underset{i\in [\![ 1; \mathcal{N} ]\!] \setminus S_{m}}\argmax \left( I(X^i , Y) - \frac{1}{m} \sum_{j \in S_{m}} \tilde{I} \left( \left\|\bs{\xi}_i - \bs{\xi}_j \right\|_{2} \right) \right)$ has not changed much during a number of iterations. A condition on the mutual information $I(X^i, Y)$ can also be added to avoid selecting quasi-irrelevant features. For stage 1 of Algorithm~\ref{alg:AlgoFS}, computing all the terms $\left\|\bs{\xi}_{j_1} - \bs{\xi}_{j_2} \right\|_{2}$ of the matrix of pairwise mesh nodes distances is not necessary: only a few lines of this matrix corresponding to randomly selected nodes are evaluated, which is sufficient to build the DOE. In other words, one computes the distances between a few nodes and all the mesh nodes.

\begin{algorithm}
    \caption{Geostatistical mRMR}
    \label{alg:AlgoFS}
  \begin{algorithmic}[1]
    \INPUT training set $\{ (x_{i},y_{i})\}_{1 \leq i\leq N_{\textrm{train}}}$, set of mesh nodes $\{ \bs{\xi}_{i} \}_{1 \leq i\leq \mathcal{N}}$, stopping criterion. 
    \OUTPUT set of selected features.
    \STATE \textbf{Stage 1 (design of experiments):}
    \Indent
    \STATE Select distance values $r_j$.
    \STATE For each $r_j$, draw $n_j$ pairs of mesh nodes $(\bs{\xi}_{j_1}, \bs{\xi}_{j_2})$ such that $\left\|\bs{\xi}_{j_1} - \bs{\xi}_{j_2} \right\|_{2} \approx r_j$.
    \EndIndent
    \STATE \textbf{Stage 2 (metamodel for redundancy terms):}
    \Indent
    \STATE Compute the mutual information $I(X^i , X^j )$ for each pair selected in Stage 1.
    \STATE Train a metamodel $\tilde{I}$ such that $I(X^i , X^j ) \approx \tilde{I} \left( \left\|\bs{\xi}_{i} - \bs{\xi}_{j} \right\|_{2} \right)$.
    \EndIndent
    \STATE \textbf{Stage 3 (compute relevance terms):}
    \Indent
    \STATE Compute $I(X^i, Y)$ for all $i \in [\![ 1; \mathcal{N} ]\!]$.
    \EndIndent
    \STATE \textbf{Stage 4 (greedy feature selection):}
    \Indent
      \STATE $S_1 := \underset{i\in [\![ 1; \mathcal{N} ]\!]}\argmax \ I(X^i , Y)$
      \STATE $m := 1$
      \WHILE{stopping criterion not satisfied}
        \STATE $S_{m+1} := S_{m} \cup \{ \underset{i\in [\![ 1; \mathcal{N} ]\!] \setminus S_{m}}\argmax \left( I(X^i , Y) - \frac{1}{m} \sum_{j \in S_{m}} \tilde{I} \left( \left\|\bs{\xi}_i - \bs{\xi}_j \right\|_{2} \right) \right) \} $
        \STATE $m := m+1$
      \ENDWHILE
      \EndIndent
      \STATE \textbf{return} $S_m$
  \end{algorithmic}
\end{algorithm}

\begin{remark}
A parallel can be drawn between our feature selection strategy and hyper-reduction methods~\cite{Ryckelynck2005, EIM, ECSW, ECM} used to accelerate complex nonlinear problems in physics (see~\cite{HROMdesignOptim} for design optimization and~\cite{Casenave1} for large-scale simulations). Hyper-reduction methods aim at finding a reduced set of integration points in the finite-element mesh that is sufficient to predict the behavior of the physical system. The constitutive equations are solved on this reduced integration domain only, while the values of quantities of interest at the remaining integration points can be recovered with the Gappy-POD~\cite{GappyPOD}. In short, hyper-reduced solvers make predictions from a reduced number of points in a mesh, like the classifiers used in this paper do when combined with the geostatistical mRMR. Although the objectives are different, both hyper-reduction and geostatistical mRMR feature selection benefit from the properties of physics data to reduce the complexity of numerical tasks.
\end{remark}

\subsection{Numerical results}

The red curve on Figure~\ref{MIvsDist} corresponds to the metamodel estimating redundancy terms. In this example, we choose:\begin{equation}
\tilde{I}(r) = I_{\infty} + \gamma_1 (r_1 - r)^{\alpha_1} H(r_1 - r) + \gamma_2 (r_2 - r)^{\alpha_2} H(r_2 - r)
\end{equation}
\noindent where $H$ is the Heaviside step function and $I_{\infty}, \gamma_1, \gamma_2, r_1, r_2, \alpha_1, \alpha_2$ are calibration parameters that are adjusted manually. In the DOE, the step between distances $r_j$ is smaller for small distances, in order to better capture the evolution of the mutual information in its high gradient regime. The number $n_j$ of pairs of nodes separated by a distance of $r_j$ selected in the DOE also depends on $r_j$: as higher variances were expected for small distances, $n_j$ decreases when $r_j$ increases. In total, $749$ terms $I(X^i, X^j)$ are computed, which takes $5.12$ seconds using Scikit-learn~\cite{scikit-learn}. Building the DOE takes only $0.33$ seconds. Then, the greedy procedure takes $303$ seconds and selects $87$ features among the $42445$ original ones. The first iteration is the longest one with $276$ seconds, because it includes the computation of all the relevance terms $I(X^i, Y)$. As a comparison, the original mRMR algorithm takes $6469$ seconds to compute $7$ iterations only. We did not let mRMR algorithm go further, since the per-iteration computation time grows with the iteration number. For a fair comparison, our implementations of mRMR and stages 3 and 4 of the geostatistical mRMR are the same except that redundancy terms are evaluated with Scikit-learn for mRMR and with the function $\tilde{I}$ for the geostatistical mRMR.

Table~\ref{FSevaluation} compares the relevance $D(S,Y)$, the true redundancy $R(S)$, the approximate redundancy $\tilde{R}(S)$ estimated with $\tilde{I}$, the true cost function $D(S,Y)-R(S)$ and the approximate cost function $D(S,Y)-\tilde{R}(S)$ for three different feature selection strategies:
\begin{itemize}
\item the geostatistical mRMR feature selection (Algorithm~\ref{alg:AlgoFS}), selecting a set $S^*$ of features;
\item a univariate filter algorithm selecting the features with the highest mutual information (MI) scores $I(X^i, Y)$. This algorithms finds a set $S^{MI}$ maximizing the relevance for a given cardinality;
\item a purely geometric feature selection algorithm, randomly selecting the first feature and adding features in a greedy manner so that the distance to the closest point $\bs{\xi}_i, \ i\in S_m$ is maximized. This algorithm tends to select a set $S^G$ of well-distributed features in order to get a low redundancy for a given cardinality.
\end{itemize}
Since the geostatistical mRMR automatically selected $87$ features, the two other approaches are applied with $|S^G|=|S^{MI}|=87$ as a target. Table~\ref{FSevaluation} shows that the relevance of the set $S^*$ selected by our algorithm is in the same order of magnitude as the relevance of the set $S^{MI}$. Its redundancy is in the same order of magnitude as the redundancy of the set $S^G$. These results show that the geostatistical mRMR algorithm does have the desired behavior: it selects a subset of features $S^*$ with high relevance and low redundancy. Figure~\ref{feature_selection} shows the features selected by the three different algorithms. The classification accuracies of several classifiers using the reduced features $S^*$ are given in the last section of the article.

\begin{table}[h!]
\centering
\caption{\textbf{Evaluation of the geostatistical mRMR feature selection algorithm.}}
      \begin{tabular}{cccccc}
        \hline \\[-1em]
        Algorithm & $D(S,Y)$ & $\tilde{R}(S)$ & $R(S)$ & $D(S,Y)-\tilde{R}(S)$ & $D(S,Y)-R(S)$ \\
        \\[-1em] \hline
        Geostatistical mRMR ($S^*$) & $0.0460$  & $0.0816$ & $0.1111$ & $-0.0356$ & $-0.0651$ \\
        MI-based filter ($S^{MI}$) & $0.0671$  & $0.9794$ & $0.8129$ & $-0.9124$ & $-0.7458$ \\
        Geometric filter ($S^G$) & $0.0090$  & $0.0788$ & $0.1072$ & $-0.0699$ & $-0.0982$ \\ \hline
      \end{tabular}
      \label{FSevaluation}
\end{table}

\begin{figure}[!h]
\centering
\includegraphics[scale=0.45]{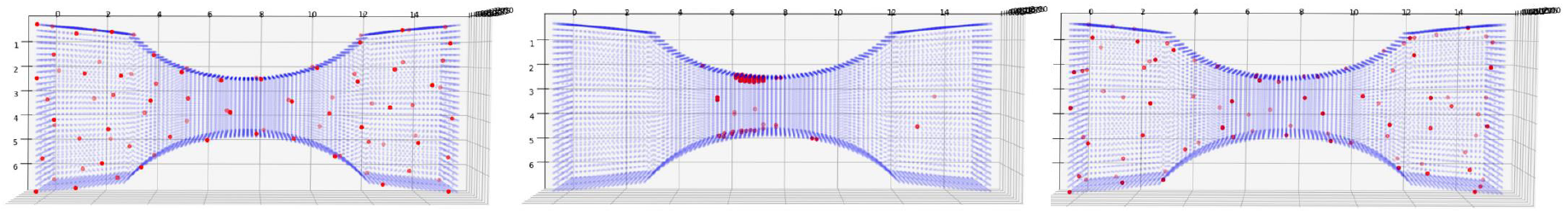}
\caption{Red dots indicate the selected features. From the left to the right: geometric feature selection, MI-based feature selection, geostatistical mRMR.}
\label{feature_selection}
\end{figure}

\begin{remark}
The geometric feature selection algorithm gives rather good results in terms of the cost function, but it does not mean that it is an appropriate approach. Indeed, one can see that the relevance of $S^G$ is very low, since this algorithm does not use any information concerning the classification problem.
\end{remark}

\section{Data augmentation}
\label{SectionDA}

\subsection{Pure sets}

\begin{definition}
\textit{(Convex set~\cite{rockafellar1970convex}, p.~10)} Let $V$ be a real vector space. A non-empty set $\mathcal{S} \subset V$ is convex if:
\begin{equation}
\forall ( x_1 , x_2 ) \in \mathcal{S}^2 , \ \forall \lambda \in [0;1], \quad \lambda x_1 + (1-\lambda) x_2 \in \mathcal{S}
\end{equation}
\end{definition}

\begin{definition}
\textit{(Convex combination~\cite{rockafellar1970convex}, p.~11)} Let $\{ x_i \}_{1 \leq i \leq n}$ be a finite set of elements of a real vector space $V$. A convex combination of $\{ x_i \}_{1 \leq i \leq n}$ is a vector $x\in V$ such that:
\begin{equation}
\exists \ ( \lambda_i )_{1 \leq i \leq n} \in \mathbb{R}_{+}^n \quad | \quad \sum_{i=1}^{n} \lambda_i = 1 \quad \textrm{and} \quad x = \sum_{i=1}^{n} \lambda_i x_i
\end{equation}
\end{definition}

\begin{definition}
\textit{(Convex hull of a set~\cite{rockafellar1970convex}, p.~12)} Let $V$ be a real vector space and $\mathcal{S}$ a non-empty set included in $V$. The convex hull or convex envelope $\mathcal{E}(\mathcal{S})$ of $\mathcal{S}$ is the smallest convex set containing $\mathcal{S}$. Equivalently, the convex hull $\mathcal{E}(\mathcal{S})$ can be defined as the set of all convex combinations of all finite subsets of $\mathcal{S}$.
\end{definition}

\begin{proper}
\textit{(Image of a convex hull by a linear map)} Let $V$ and $W$ be two real vector spaces, and let $\mathcal{L}: V \rightarrow W$ be a linear map. Let $\mathcal{S}$ be a non-empty set included in $V$. Then:
\begin{equation}
\mathcal{L} \left( \mathcal{E}(\mathcal{S}) \right) = \mathcal{E} \left( \mathcal{L}(\mathcal{S}) \right)
\end{equation}
\end{proper}

\begin{proof}
Let $z\in\mathcal{E}\left(\mathcal{L}(\mathcal{S})\right)$. Following the definition of a convex hull, there exists $n\in\mathbb{N}^{*}$ such that: \begin{equation}
\exists \ (w_i)_{1\leq i \leq n}\in \mathcal{L}(\mathcal{S})^{n}, \ \exists \ (\lambda_i)_{1\leq i \leq n}\in \mathbb{R}_{+}^{n} \quad | \quad \sum_{i=1}^{n} \lambda_i =1 \ \textrm{and} \ z = \sum_{i=1}^{n} \lambda_i w_i
\end{equation} For all $i\in [\![ 1;n ]\!]$, as $w_i \in \mathcal{L}(\mathcal{S})$, there exists $v_i \in S$ such that $w_i = \mathcal{L}(v_i)$. By linearity of $\mathcal{L}$: \begin{equation}
z = \sum_{i=1}^{n} \lambda_i \mathcal{L}(v_i) = \mathcal{L} \left( \sum_{i=1}^{n} \lambda_i v_i \right) \in\mathcal{L}\left(\mathcal{E}(\mathcal{S})\right)
\end{equation} so $\mathcal{E}\left(\mathcal{L}(\mathcal{S})\right) \subset \mathcal{L}\left(\mathcal{E}(\mathcal{S})\right)$. The other inclusion can be shown using exactly the same arguments. Thus: $\mathcal{L} \left( \mathcal{E}(\mathcal{S}) \right) = \mathcal{E} \left( \mathcal{L}(\mathcal{S}) \right)$.
\end{proof}

\noindent This property has a very simple yet important consequence for the data augmentation algorithm presented in this paper:

\begin{proper}
Let $V$ and $W$ be two real vector spaces, and let $\mathcal{L}: V \rightarrow W$ be a linear map. Let $\mathcal{S}$ be a non-empty set included in $V$. Then, for all $x\in V$: \begin{equation}
\mathcal{L}(x) \notin \mathcal{E} \left( \mathcal{L} (\mathcal{S}) \right) \Rightarrow x \notin \mathcal{E} (\mathcal{S})
\end{equation}
\label{PropertyLinMap}
\end{proper}

\begin{proof}
By contraposition, $x \in \mathcal{E} (\mathcal{S}) \Rightarrow \mathcal{L}(x) \in \mathcal{L} \left( \mathcal{E} (\mathcal{S}) \right) = \mathcal{E} \left( \mathcal{L} (\mathcal{S}) \right)$. 
\end{proof}

\noindent Our data augmentation strategy uses this property in the particular case where the linear map is a projection. As a reminder, the notation $\mathcal{K}$ stands for the true classifier assigning any input $x$ to a single label $y \in [\![ 1;K ]\!]$. Before giving the description of the algorithm, let us introduce the definition of \textit{pure sets} in a labeled dataset and a characterization theorem:

\begin{definition}
\textit{(Pure set)} Let $n$ be a positive integer, and let $\mathcal{S} = \{ x_i \}_{1 \leq i \leq n}$ be a finite set of elements of a real vector space $V$ labeled by $\mathcal{K}$. Let $\mathcal{S}_{\mathcal{I}} = \{ x_i \}_{i \in \mathcal{I} \subset [\![ 1;n ]\!]}$ be a non-empty subset of $\mathcal{S}$. The set $\mathcal{S}_{\mathcal{I}}$ is pure in $\mathcal{S}$ if $\mathcal{K} \left( \mathcal{S} \cap \mathcal{E}(\mathcal{S}_{\mathcal{I}}) \right)$ is a singleton, which means that the set $\mathcal{S}_{\mathcal{I}}$ is pure in $\mathcal{S}$ if all of the points of $\mathcal{S}$ that belong to the convex hull of $\mathcal{S}_{\mathcal{I}}$ have the same label.
\end{definition}

Let $\mathcal{S} = \{ x_i \}_{1 \leq i \leq n}$ be a finite set of elements of a finite-dimensional real vector space $V$ labeled by integers $\{ y_i \}_{1 \leq i \leq n}$ in $[\![ 1;K ]\!]$, with $K \leq n$. For all $k \in [\![ 1;K ]\!]$, $C_k$ denotes the set of elements of $\mathcal{S}$ labeled by $k$:\begin{equation}
C_k = \{ x_i \in \mathcal{S} \ | \ y_i = k \}
\end{equation}

\noindent For any subset $S_k$ of $C_k$ with cardinality $| S_k |$, $\hat{A}_{S_k} \in \mathbb{R}^{\dim(V)\times | S_k |}$ denotes the matrix whose columns contain the coordinates of the elements of $S_k$. The matrix denoted by $A_{S_k}$ is obtained by adding a row of ones below the matrix $\hat{A}_{S_k}$, giving a matrix of size $(1+\dim(V))\times | S_k |$.

\begin{thm}
\textit{(Pure set characterization)} Let $S_k$ be a subset of $C_k$ with cardinality $| S_k |$. The set $S_k$ is pure in $\mathcal{S}$ if and only if for all $x$ in $\mathcal{S} \setminus C_k$ the linear system:\begin{equation}
A_{S_k} w = \begin{pmatrix} x \\ 1 \end{pmatrix}
\label{LinSystemPureSet}
\end{equation}
\noindent has no nonnegative solution $w\in\mathbb{R}_{+}^{| S_k |}$.
\label{ThmPureSetCarac}
\end{thm}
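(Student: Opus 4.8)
The statement is a reformulation of the definition of purity in terms of linear algebra, so the plan is to unwind both sides and show they are equivalent, point by point. Recall that $S_k$ is pure in $\mathcal{S}$ if and only if every point of $\mathcal{S}$ lying in $\mathcal{E}(S_k)$ carries the label $k$; since all points of $C_k$ already carry the label $k$, this is equivalent to saying that no point of $\mathcal{S}\setminus C_k$ lies in $\mathcal{E}(S_k)$. So the first step is to rewrite the purity condition as: for all $x\in\mathcal{S}\setminus C_k$, $x\notin\mathcal{E}(S_k)$.

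The second step is to translate membership in $\mathcal{E}(S_k)$ into the solvability of the linear system. By the characterization of the convex hull as the set of all convex combinations of points of $S_k$, we have $x\in\mathcal{E}(S_k)$ if and only if there exist nonnegative coefficients $w_1,\dots,w_{|S_k|}$ summing to $1$ with $\sum_j w_j v_j = x$, where $v_1,\dots,v_{|S_k|}$ are the elements of $S_k$. Stacking the two conditions $\sum_j w_j v_j = x$ and $\sum_j w_j = 1$ is exactly the matrix equation $A_{S_k} w = \begin{pmatrix} x \\ 1 \end{pmatrix}$ with $w\in\mathbb{R}_+^{|S_k|}$: the top $\dim(V)$ rows encode $\hat{A}_{S_k} w = x$ and the appended row of ones encodes $\sum_j w_j = 1$. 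Hence $x\in\mathcal{E}(S_k)$ if and only if the system \eqref{LinSystemPureSet} has a nonnegative solution, and negating this gives the claimed equivalence. Combining with step one finishes the argument.

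One small point worth making explicit is that the convex hull of a finite set is precisely the set of convex combinations of \emph{that} finite set (not of arbitrary finite subsets of it), which is standard; this is what lets us bound the number of coefficients by $|S_k|$ and write a single finite-dimensional system. I do not expect any real obstacle here: the only thing to be careful about is the bookkeeping between "$\mathcal{K}\left(\mathcal{S}\cap\mathcal{E}(S_k)\right)$ is a singleton" and "no foreign point lies in $\mathcal{E}(S_k)$" — one must note that $\mathcal{E}(S_k)$ always contains at least one point of $\mathcal{S}$ labeled $k$ (namely any element of $S_k\subset C_k$ itself, since $S_k$ is nonempty), so the singleton, if it exists, is necessarily $\{k\}$, and purity fails exactly when some point of $\mathcal{S}\setminus C_k$ is also in the hull. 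Once that is observed, the rest is a direct unpacking of definitions.
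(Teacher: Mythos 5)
Your proposal is correct and follows essentially the same route as the paper's proof: the appended row of ones turns the nonnegative solvability of $A_{S_k}w = (x^T\ 1)^T$ into the existence of a convex combination of $S_k$ equal to $x$, i.e.\ $x\in\mathcal{E}(S_k)$, and negating this over all $x\in\mathcal{S}\setminus C_k$ is exactly the purity condition. Your extra remark that $\mathcal{E}(S_k)$ always contains points of $C_k$, so the singleton in the definition must be $\{k\}$, is a small bookkeeping step the paper leaves implicit but that does no harm to make explicit.
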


\begin{proof}
Let $x \in \mathcal{S} \setminus C_k$. Equation~\eqref{LinSystemPureSet} has no nonnegative solution if and only if:\begin{equation}\nexists \ w \in \mathbb{R}_{+}^{| S_k |} \quad | \quad  \sum_{i=1}^{| S_k |} w_i = 1 \ \textrm{and} \ \hat{A}_{S_k} w = x
\end{equation}\begin{equation}\iff x \notin \mathcal{E} \left( S_k \right)
\end{equation} \noindent which ends the proof.\end{proof}

\begin{cor}
\textit{(Pure set testing)} Let $S_k$ be a subset of $C_k$ with cardinality $| S_k |$, and let $\mathcal{L}: V \rightarrow W$ be a linear map, where $W$ is a finite-dimensional real vector space. If for all $x$ in $\mathcal{S} \setminus C_k$ the linear system:\begin{equation}
A_{\mathcal{L}(S_k)} w = \begin{pmatrix} \mathcal{L}(x) \\ 1 \end{pmatrix}
\label{LinSystemPureSetTest}
\end{equation}
\noindent has no nonnegative solution in $\mathbb{R}_{+}^{| S_k |}$, then $S_k$ is pure in $\mathcal{S}$.
\label{PureSetTesting}
\end{cor}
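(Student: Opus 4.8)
The plan is to obtain this corollary as an immediate consequence of Theorem~\ref{ThmPureSetCarac} and Property~\ref{PropertyLinMap}, with no new idea required. First I would unwind the hypothesis exactly as in the proof of Theorem~\ref{ThmPureSetCarac}: the system \eqref{LinSystemPureSetTest} has no nonnegative solution $w\in\mathbb{R}_{+}^{|S_k|}$ if and only if there is no family $(\lambda_i)$ with $\lambda_i\geq 0$, $\sum_i\lambda_i=1$ and $\sum_i\lambda_i\,\mathcal{L}(x_i)=\mathcal{L}(x)$, i.e. if and only if $\mathcal{L}(x)\notin\mathcal{E}(\mathcal{L}(S_k))$. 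Hence the hypothesis reads: $\mathcal{L}(x)\notin\mathcal{E}(\mathcal{L}(S_k))$ for every $x\in\mathcal{S}\setminus C_k$. The only bookkeeping point is that $A_{\mathcal{L}(S_k)}$ should be read as the $(1+\dim(W))\times|S_k|$ matrix whose columns are the images by $\mathcal{L}$ of the columns of $A_{S_k}$, and that these images $\mathcal{L}(x_i)$ need not be pairwise distinct — which is harmless, since the convex-combination characterization of the convex hull does not require distinctness.

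Second, I would apply Property~\ref{PropertyLinMap} with the generic set of that statement taken to be $S_k$: from $\mathcal{L}(x)\notin\mathcal{E}(\mathcal{L}(S_k))$ one gets $x\notin\mathcal{E}(S_k)$, for every $x\in\mathcal{S}\setminus C_k$. By the equivalence established inside the proof of Theorem~\ref{ThmPureSetCarac} (for $x\in\mathcal{S}\setminus C_k$, the system \eqref{LinSystemPureSet} has no nonnegative solution if and only if $x\notin\mathcal{E}(S_k)$), this means \eqref{LinSystemPureSet} has no nonnegative solution for any $x\in\mathcal{S}\setminus C_k$, and Theorem~\ref{ThmPureSetCarac} then yields that $S_k$ is pure in $\mathcal{S}$, which is the claim.

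There is essentially no obstacle: the corollary is a one-line combination of two already-proven facts, its entire content being the implication $\mathcal{L}(x)\notin\mathcal{E}(\mathcal{L}(S_k))\Rightarrow x\notin\mathcal{E}(S_k)$ supplied by Property~\ref{PropertyLinMap}. If one preferred to bypass Property~\ref{PropertyLinMap}, the cleanest route is the contrapositive: were $S_k$ not pure in $\mathcal{S}$, Theorem~\ref{ThmPureSetCarac} would produce $x\in\mathcal{S}\setminus C_k$ and $w\in\mathbb{R}_{+}^{|S_k|}$ solving \eqref{LinSystemPureSet}; applying $\mathcal{L}$ to the first $\dim(V)$ rows and using linearity, the very same $w$ solves \eqref{LinSystemPureSetTest}, contradicting the hypothesis. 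Either way the proof is short; the point worth emphasizing in the surrounding text is the practical payoff — one may test purity after projecting to the low-dimensional feature space, where the systems \eqref{LinSystemPureSetTest} are far cheaper to solve than \eqref{LinSystemPureSet}.
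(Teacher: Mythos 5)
Your proposal is correct and follows essentially the same route as the paper: the hypothesis is translated via Theorem~\ref{ThmPureSetCarac} into the statement that $\mathcal{L}(x)\notin\mathcal{E}(\mathcal{L}(S_k))$ for all $x\in\mathcal{S}\setminus C_k$, and Property~\ref{PropertyLinMap} then transfers this back to $x\notin\mathcal{E}(S_k)$, giving purity of $S_k$ in $\mathcal{S}$. Your remarks on the possible non-injectivity of $\mathcal{L}$ and the alternative contrapositive argument are sound but not needed beyond what the paper already does.
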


\begin{proof}
Equation~\eqref{LinSystemPureSetTest} characterizes the purity of $\mathcal{L}(S_k)$ in $\mathcal{L}(\mathcal{S})$ (Theorem~\ref{ThmPureSetCarac}), which implies that $S_k$ is pure in $\mathcal{S}$ (Property~\ref{PropertyLinMap}).
\end{proof}

Figure~\ref{fig:FigPureSets} illustrates the concept of pure sets. On this figure, the set $C_1$ is made of all the elements represented by dots, while the crosses form the set $C_2 = \mathcal{S} \setminus C_1$. On the left, the subset formed by the six black dots is pure since its convex hull delimited by dashed lines contains only dots. The subset made of the six black dots on the right-hand side of the figure is not pure because of the presence of a cross in its convex hull. Equation~\eqref{LinSystemPureSet} has a nonnegative solution when using the coordinates of this cross in its right-hand side.

\begin{figure}[h]
\begin{center}
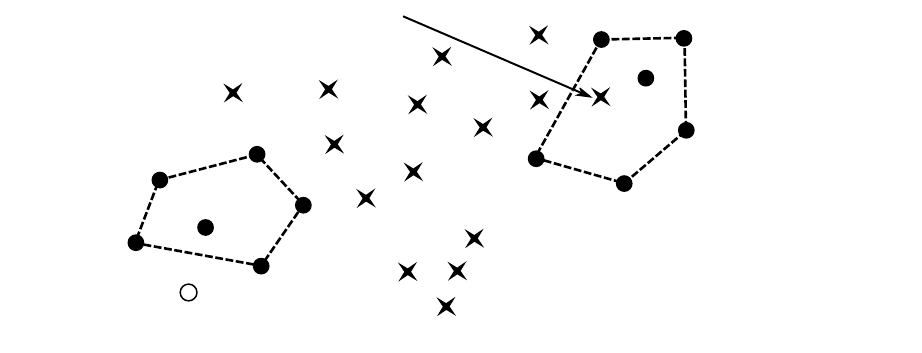
\end{center}
\caption{Illustration of the concept of pure sets on a binary classification problem.}
\label{fig:FigPureSets} 
\end{figure}

\subsection{The data augmentation algorithm}

The objective is to generate new data points $x\in\mathcal{X}$ in a given class $y\in [\![ 1;K ]\!]$ from the preexisting observations in that class. To this end, one must apply class-preserving transformations on the training examples. New examples can be created by taking convex combinations of some subsets of the training set, for example. One way of controlling the risk that newly generated examples have wrong labels is to take convex combinations of subsets only if they are pure. Indeed, if the $k$-th class $C_k$ contains a set $S_k$ that is pure in the training set, one can expect that the probability $\Prob( Y = k \ | \ X \in \mathcal{E}\left( S_k \right) )$ is high enough to get new examples of class $C_k$ by drawing samples in $\mathcal{E}\left( S_k \right)$. In addition, the third Hadamard well-posedness condition states that the solution of a physics equation changes continuously with respect to the parameters of the problem. In the neighborhood of a point $x_0$ belonging to a pure set $S_k$, the primal solution $u$ stays in the neighborhood of the solution $u_0$ obtained with $x_0$ and is thus likely to have the same label. Hence, the objective of our algorithm is to find pure sets in the training set in order to generate new examples by convex combinations with a limited risk of getting incorrectly labeled examples. The pure sets detected by the algorithm are listed in a matrix $\mathcal{S}$ such that $\mathcal{S}[k,i]$ contains the indices of the training points forming the $i$-th pure set of the $k$-th class. The pure sets are grown from different starting points or seeds in the training set by iteratively adding the seeds' nearest neighbors in terms of the precomputed dissimilarity measure $\delta$ used for clustering in the data labeling procedure. The growth stops before losing the purity of the subsets. However, checking the purity in the high-dimensional input space can cause difficulties, even when training the data augmentation algorithm after a first dimensionality reduction like in this paper. For this reason, the algorithm checks the purity after having applied a feature selector $\pi_S$ with a small random subset of features $S$ containing $d$ features. Let us apply Property~\ref{PropertyLinMap} with $V=W$ being the input vector space containing $\mathcal{X}$ and with the linear map $\mathcal{L}$ being the feature selector $\pi_S$. As Property~\ref{PropertyLinMap} states, if no point of $\pi_{S}\left( \{ x_m \}_{1 \leq m \leq N_{\textrm{train}}} \setminus C_k \right)$ belongs to the convex hull of $\pi_{S}\left( \{ x_m \}_{m\in\mathcal{S}[k,i]} \right)$, then the set $\mathcal{E}\left( \{ x_m \}_{m\in\mathcal{S}[k,i]} \right)$ does not contain any point labeled with $k' \neq k$. Since a set can lose its purity after projection, the algorithms tries $p_{\max}$ random feature selectors $\pi_S$ before considering that the set is not pure. In practice, the purity of $\pi_{S}\left( \{ x_m \}_{m\in\mathcal{S}[k,i]} \right)$ in $\pi_{S}\left( \{ x_m \}_{1 \leq m \leq N_{\textrm{train}}} \right)$ is numerically tested by solving a nonnegative least squares (NNLS~\cite{doi:10.1137/1.9781611971217}) problem. If for all points $x \in \{ x_m \}_{1 \leq m \leq N_{\textrm{train}}} \setminus C_k$ the inequality:\begin{equation}
\underset{w \in \mathbb{R}_{+}^{| \mathcal{S}[k,i] |}}{\min} || A_{\pi_{S}\left( \{ x_m \}_{m\in\mathcal{S}[k,i]} \right)} w - \tilde{\pi}_{S}(x) ||_2 \geq \varepsilon_{\textrm{DA}} || \tilde{\pi}_{S}(x) ||_2
\end{equation}
\noindent is satisfied with $\tilde{\pi}_{S}(x) = ( \pi_{S}(x)^T \ 1 )^T$ and with $\varepsilon_{\textrm{DA}}$ being the tolerance of the data augmentation algorithm, then Corollary~\ref{PureSetTesting} implies that $\{ x_m \}_{m\in\mathcal{S}[k,i]}$ is pure in $\{ x_m \}_{1 \leq m \leq N_{\textrm{train}}}$. Algorithm~\ref{alg:DAtrain} describes the data augmentation algorithm. It calls Algorithm~\ref{alg:DAinit} to find $n$ well-distribued seeds per class before growing pure sets. It is noteworthy that using few pure sets to generate many examples would increase the distribution gap~\cite{distribGap} between augmented data and original data. To avoid this issue, one had better use many well-distributed seeds to distribute data augmentation efforts between the pure sets and thus limit the divergence between the augmented distribution and the true data-generating distribution.

\begin{remark}
Realizations of the random variable $X$ belong to a convex domain $\mathcal{X}$ related to physics constraints. When considering surface random temperature fields defined on the boundaries of a solid, $\mathcal{X}$ is a hypercube consisting of all the fields taking values between zero Kelvin degree and the material's melting point. These random fields can be used as Dirichlet boundary conditions for the nonlinear heat equation. The assumption of a linear thermal behavior is added when considering three-dimensional random temperature fields defined inside the solid, so that the set $\mathcal{X}$ remains convex when adding the constraint that the random field must satisfy the heat equation. More generally, convex combinations respect physics constraints defined by linear operators, such as linear partial differential equations and Dirichlet, Neumann and Robin boundary conditions.
\end{remark} 

\begin{algorithm}
    \caption{Seeds selection for data augmentation. Note: all the dissimilarities have already been computed in the data labeling procedure.}
    \label{alg:DAinit}
  \begin{algorithmic}[1]
    \INPUT training set $\{ (x_{i},y_{i})\}_{1 \leq i\leq N_{\textrm{train}}}$, class label $k$, class center $\tilde{x}_k$, dissimilarity matrix $\bs{\delta}$, target number of seeds $n$, preselection parameters $(\varepsilon_1 , \varepsilon_2 ) \in [0;1]^2$. 
    \OUTPUT List $l_k$ of $n$ indices of seeds candidates for the $k$-th class.
    \STATE \textbf{Stage 1 (filter the data):}
    \STATE Find the minimum dissimilarity $\delta_{\textrm{ref}}^k$ separating the class center $\tilde{x}_k$ from a point belonging to another class.
    \STATE Remove points having neighbors belonging to foreign classes within a distance of $\varepsilon_1 \delta_{\textrm{ref}}^k$.
    \STATE Remove isolated points having no neighbor within a distance of $\varepsilon_2 \delta_{\textrm{ref}}^k$.
    \STATE $\mathcal{I}_{k} := $ set of the indices of the remaining points in class $k$.
    \STATE \textbf{Stage 2 (maximin greedy selection):}
    \STATE Initialize $l_k$ with the index of the class center $\tilde{x}_k$.
    \FOR{$i \in [\![ 2; \min(n, |\mathcal{I}_{k}|-1) ]\!]$}
      \STATE $j := \underset{l \in \mathcal{I}_{k} \setminus l_k}{\argmax} \ \ \underset{m \in l_k}{\min} \ \delta_{lm}$
      \STATE Append $j$ to $l_k$.
    \ENDFOR
    \STATE \textbf{return} $l_k$
  \end{algorithmic}
\end{algorithm}

\begin{algorithm}
    \caption{Data augmentation algorithm}
    \label{alg:DAtrain}
  \begin{algorithmic}[1]
    \INPUT training set $\{ (x_{i},y_{i})\}_{1 \leq i\leq N_{\textrm{train}}}$, dissimilarity matrix $\bs{\delta}$, per-class number of seeds $n$, maximum number of pure set testings $p_{\textrm{max}}$, dimension $d$ of subspaces for pure set testings, number of augmented data $N_{DA}$. 
    \OUTPUT augmented data $\{ (\tilde{x}_{i},\tilde{y}_{i})\}_{1 \leq i\leq N_{DA}}$ and matrix $\mathcal{S}$ listing pure sets.
    \STATE \textbf{Stage 1 (find pure sets in the training set):}
    \FOR{$k \in [\![ 1;K ]\!]$}
      \STATE Apply Algorithm~\ref{alg:DAinit} to get the list $l_k$ of $n$ indices of seeds candidates.
      \FOR{$i \in [\![ 1;n ]\!]$}
        \STATE $\mathcal{S}_{1} := \{ l_{k}[i] \}$
        \STATE $\textrm{neighbors} := \textrm{argsort}(\bs{\delta}[l_{k}[i],:])$
        \STATE $j := 1$
        \STATE $\textrm{setPurity} := \textrm{True}$
        \WHILE{$\textrm{setPurity}$}
          \STATE $\mathcal{S}_{j+1} := \mathcal{S}_{j} \cup \{ \textrm{neighbors}[j] \}$
          \STATE $j := j+1$
          \STATE $p := 1$
          \STATE Select a random subset $S$ of $d$ features.
          \WHILE{$\{ \pi_{S}(x_m) \}_{m \in \mathcal{S}_{j}}$ is not pure in $\{ \pi_{S}(x_m) \}_{1 \leq m \leq N_{\textrm{train}}}$ and $p \leq p_{\textrm{max}}$}
            \STATE Select a new random subset $S$ of $d$ features.
            \STATE p := p+1
          \ENDWHILE
          \IF{$p = p_{\textrm{max}}+1$}
            \STATE $\textrm{setPurity} := \textrm{False}$
          \ENDIF
        \ENDWHILE
        \STATE $\mathcal{S}[k,i] := \mathcal{S}_{j-1}$
      \ENDFOR
    \ENDFOR
  \STATE \textbf{Stage 2 (generate new data):}
  \STATE Generate $N_{DA}$ random convex combinations $\{\tilde{x}_{i}\}_{1 \leq i\leq N_{DA}}$ of the pure sets listed in $\mathcal{S}$. Convex combinations $\tilde{x}_{i}$ of the pure set described in $\mathcal{S}[k,j]$ are labeled by $\tilde{y}_{i}=k$.
  \STATE \textbf{return} $\{ (\tilde{x}_{i},\tilde{y}_{i})\}_{1 \leq i\leq N_{DA}}$ and $\mathcal{S}$
  \end{algorithmic}
\end{algorithm}

\subsection{Numerical results}

Linear discriminant analysis (LDA), commonly used for classification tasks, can also be used for supervised dimensionality reduction by projecting the data onto the subspace maximizing the between-class variance, as explained in~\cite{Hastie2005TheEO}. For the classification problem presented in this paper, the training data are visualized in the two-dimensional subspace obtained by LDA in Figure~\ref{clusteringLDAvisu}. Although this subspace is the one that best separates the classes, one can see that the training examples do not form well-separated groups. For this reason, testing the purity of subsets of training data before generating new examples by convex combinations is necessary to reduce the risk of getting incorrectly labeled augmented data.

\begin{figure}[!h]
\centering
\includegraphics[scale=0.6]{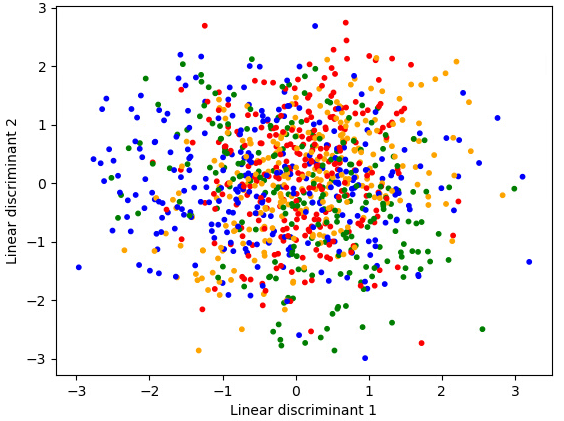}
\caption{Data visualization in the 2D subspace maximizing the separation between classes (supervised linear dimensionality reduction using LDA).}
\label{clusteringLDAvisu}
\end{figure}

The data augmentation algorithm finds about 60 pure sets per class with an average population of 5 training examples, using random subspaces of dimension 5 to test the purity. Note that two pure sets are merged only when one is included in the other, since the union of two pure sets is not always pure. The computation time for the data augmentation training phase is 40 minutes. Once pure sets have been found, one can generate as many augmented examples as necessary. Generating 5400 examples to multiply the size of the training set by 10 takes less than a second. Among the augmented data, 400 examples are taken for the evaluation of the data augmentation algorithm. The data labeling procedure involving numerical simulations is applied for these 400 examples in order to estimate the percentage of incorrectly labeled data. It turns out that none of these examples is incorrectly labeled, which validates the algorithm for our problem. The benefits of data augmentation for the classification task are evaluated in the final section.

\section{Validation of our feature selection and data augmentation algorithms}
\label{SectionValidation}

\subsection{Classification performances of various classifiers}

In this section, 14 different classifiers are trained and evaluated on our classification problem. To evaluate whether the features selected by geostatistical mRMR are relevant for classification purposes, each classifier is tested twice: once in combination with the geostatistical mRMR and once with principal component analysis (PCA) with 10 modes. Since the random temperature fields derive from a Gaussian random field involving only 10 modes, the database obtained after applying PCA contains all the information of the original data. Each combination of one of the 14 classifiers with PCA or feature selection is trained twice: once on the true training set containing $N_{\textrm{train}}=600$ examples, and once on the augmented training set made of $6000$ examples.

All the classifiers are trained with Scikit-learn~\cite{scikit-learn}, except multilayer perceptrons (MLPs) and radial basis function networks (RBFNs) which are trained with PyTorch~\cite{PyTorch_NIPS2019_9015}. We train the RBFNs in a fully supervised manner, which means that the parameters of the radial basis functions are learnt by gradient descent like the weights of the fully-connected layers. In addition, we use only one hidden layer for RBFNs, since these artificial neural networks generally have shallow architectures, as explained in~\cite{Aggarwal18}. Deeper architectures have been tested for MLPs. Scikit-learn's MLP classifier has also been tested; it is called \textit{simple MLP} in this paper, because its architecture is only made of fully-connected layers and does not include dropout~\cite{dropout2014} nor batch normalization~\cite{batchNorm}. All the classifiers based on artificial neural networks are trained with Tikhonov regularization and early stopping~\cite{Goodfellow-et-al-2016}. Logistic regression is trained with elastic net regularization~\cite{10.2307/3647580}. Kernels used for support vector machines (SVMs) are obtained by combining several polynomial kernels with different hyperparameters. Kernel design could be optimized using multiple kernel learning algorithms~\cite{MultipleKernelLearning}, but we simply build our kernels by evaluating different combinations on the validation set, just as when we look for a good architecture for artificial neural networks.

The classification accuracies on test data are given in Table~\ref{ClassificationResults}. Of course, this ranking is specific to the classification problem presented in this paper, no general conclusion can be drawn from this particular numerical application. On this classification problem, when using augmented data in the training phase, the highest test accuracy reached with linear classifiers is $43.5\%$, obtained with the linear SVM combined with PCA. The fact that k-nearest neighbors classifiers barely exceed $50.0\%$ of accuracy on this problem is related to an observation that was made in~\cite{ROM-net}: there is no simple correlation between the Euclidean distance and the physics-informed dissimilarity measure used in dictionary-based ROM-nets. MLPs get the best results, reaching $87.0\%$ of accuracy when combined with our data augmentation and feature selection algorithms. Interestingly, quadratic discriminant analysis (QDA) gives excellent results while having no hyperparameter to tune, contrary to the two other families of classifiers obtaining the best results, namely MLPs and multiple kernel SVMs. This makes QDA the best compromise between accuracy and training complexity for this specific classification task.

Although PCA perfectly describes the input data in this example, the geostatistical mRMR feature selection algorithm enables reaching higher accuracies with some classifiers. Not only it behaves as the original mRMR when selecting features, but it also gives satisfying results when combined with a classifier. Concerning data augmentation, Table~\ref{ClassificationResults} shows that our algorithm significantly improves classification results. The accuracy gain due to data augmentation is $4.98\%$ on average and ranges from $-2.5\%$ to $10.5\%$, increasing the accuracy in 25 cases out of 28.

\begin{table}[h!]
\centering
\caption{\textbf{Test accuracies of different classifiers with dimensionality reduction via principal component analysis (PCA) or feature selection (FS), with and without data augmentation (DA).}}
      \begin{tabular}{|cccc|}
        \hline
        Classifier & Dim. red. & Acc. with DA  & Acc. without DA \\ \hline
        Stacking (6 MLPs and logistic regression) & FS  & $90.0\%$   & $-$ \\
        Ensemble averaging (6 MLPs) & FS  & $89.0\%$   & $-$ \\ \hline
        Multilayer perceptron & FS  & $87.0\%$   & $81.0\%$ \\
        Multilayer perceptron & PCA & $86.5\%$ & $81.5\%$ \\
        Simple multilayer perceptron & PCA & $85.0\%$ & $79.5\%$ \\
        Simple multilayer perceptron & FS & $84.0\%$ & $80.0\%$ \\
        Quadratic discriminant analysis & FS & $77.5\%$ & $70.5\%$ \\
        Quadratic discriminant analysis & PCA & $76.0\%$ & $70.0\%$ \\
        Multiple kernel support vector machine & PCA & $73.0\%$ & $68.0\%$ \\
        	Multiple kernel support vector machine & FS & $72.5\%$ & $66.0\%$ \\
        Random forest & FS & $69.0\%$ & $63.0\%$ \\
        AdaBoost & FS & $68.5\%$ & $63.0\%$ \\
        Gradient-boosted trees & FS & $68.0\%$ & $58.5\%$ \\
        Radial basis function network & PCA & $63.5\%$ & $62.0\%$ \\
        Radial basis function network & FS & $62.5\%$ & $60.0\%$ \\
        Decision tree & FS & $55.5\%$ & $43.5\%$ \\
        k-nearest neighbors & PCA & $51.0\%$ & $46.0\%$ \\
        AdaBoost & PCA & $50.5\%$ & $52.5\%$ \\
        k-nearest neighbors & FS & $50.0\%$ & $47.0\%$ \\
        Gradient-boosted trees & PCA & $49.5\%$ & $48.0\%$ \\
        Random forest & PCA & $45.0\%$ & $47.5\%$ \\
        Linear support vector machine & PCA & $43.5\%$ & $33.0\%$ \\
        Linear support vector machine & FS & $40.5\%$ & $34.5\%$ \\
        Gaussian naive Bayes & FS & $39.5\%$ & $34.5\%$ \\
        Gaussian naive Bayes & PCA & $38.5\%$ & $31.5\%$ \\
        Penalized logistic regression & PCA & $38.5\%$ & $28.0\%$ \\
        Penalized logistic regression & FS & $37.0\%$ & $29.0\%$ \\
        Decision tree & PCA & $34.0\%$ & $36.5\%$ \\
        Linear discriminant analysis & PCA & $33.5\%$ & $29.0\%$ \\
        Linear discriminant analysis & FS & $32.5\%$ & $29.0\%$ \\ \hline
      \end{tabular}
      \label{ClassificationResults}
\end{table}

\subsection{How to further improve classification performances?}

Ensemble methods can be used to reduce overfitting and increase the accuracy on test data. In addition, it enables recycling different variants of a classifier that the user has trained for different hyperparameters. Using ensemble averaging with classifiers trained on the augmented dataset with feature selection, we manage to combine 6 MLPs with different architectures to reach an accuracy of $89.0\%$. When stacking these MLPs with a ridge logistic regression analyzing the predicted membership probabilities, we get an accuracy of $90.0\%$. In addition to ensemble learning methods, one can also use random noise injection to increase noise robustness, as explained in~\cite{Goodfellow-et-al-2016}.

\section{Conclusion}

Classification algorithms are used in computational physics for automatic model recommendation. Such modeling strategies enable the reduction of the computation time, or the selection between models with different physics when one wants to improve the accuracy of numerical predictions. This article deals with the specificities of the classification problems encountered in computational physics, and more particularly for dictionary-based ROM-nets. These classification problems generally have the three following issues: the lack of training data, their high dimensionality, and the non-applicability of common data augmentation techniques to physics data. To tackle these difficulties, two algorithms are proposed. The first one is a geostatistical variant of the mRMR feature selection algorithm, enabling the identification of a reduced set of relevant but non-redundant features for high-dimensional regionalized variables. The second one is a data augmentation algorithm controlling the risk of generating new examples with wrong labels by finding pure subsets in the training set. The performances and benefits of these algorithms are illustrated on a classification problem for which 14 classifiers are evaluated.

\section*{Acknowledgements}

The authors wish to thank S\'{e}bastien Da Veiga (SafranTech) for his sound advice, as well as Felipe Bordeu (SafranTech) and Julien Cortial (SafranTech) who implemented the Python library \textit{BasicTools} (https://gitlab.com/drti/basic-tools) with FC.

\section*{Funding}

Study funded by Safran and ANRT (Association Nationale de la Recherche et de la Technologie, France).

\appendix

\bibliographystyle{unsrt}
\bibliography{biblio}

\end{document}